\numberwithin{equation}{section}
\newtheorem{defi}{Definition}
\newtheorem{theorem}{Theorem}[section]
\newtheorem{lemma}[theorem]{Lemma}
\DeclareMathOperator*{\argmax}{arg\,max}
\newcommand{\bff}{\mathbf{f}}
\newcommand{\bfr}{\bff^{R}}
\newcommand{\bfz}{\mathbf{z}}
\newcommand{\bs}[1]{\boldsymbol #1}
\newcommand{\bmu}{\bs\mu}
\newcommand{\bSigma}{\bs\Sigma}
\newcommand{\bsigma}{\bs\sigma}
\newcommand{\probot}{p(\bff^{R}\mid\bfz^{R}_{1:t})}
\newcommand{\phuman}{p(\bfh\mid\bfz^{h}_{1:t})}
\newcommand{\peye}{p(\bff^{i}\mid\bfz_{1:t}^{i})}
\newcommand{\pigpshort}{p(\bff^{R},\bff\mid \bar \bfz_{1:t})}
\newcommand{\bfh}{\mathbf{h}}
\newcommand{\fa}{p(\bfh, \bff^{R},\bff\mid\bfz_{1:t})}
\title{\large \bf
Assistive Planning in Complex, Dynamic Environments: a Probabilistic Approach}
\author{\IEEEauthorblockN{Pete Trautman}
\IEEEauthorblockA{Galois Inc., Portland OR\\
peter.trautman@galois.com}}
\date{}
\begin{document}

\maketitle
\thispagestyle{plain}
\pagestyle{plain}


\begin{abstract}
\noindent 
We explore the probabilistic foundations of shared control in complex dynamic environments. In order to do this, we formulate shared control as a random process and describe the joint distribution that governs its behavior.  For tractability, we model the relationships between the operator, autonomy, and crowd as an undirected graphical model.  Further, we introduce an interaction function between the operator and the robot, that we call ``agreeability''; in combination with the methods developed in~\cite{trautman-ijrr-2015}, we extend a cooperative collision avoidance autonomy to shared control.  We therefore quantify the notion of simultaneously optimizing over agreeability (between the operator and autonomy), and safety and efficiency in crowded environments.  We show that for a particular form of interaction function between the autonomy and the operator, linear blending is recovered exactly.  Additionally, to recover linear blending, unimodal restrictions must be placed on the models describing the operator and the autonomy.  In turn, these restrictions raise questions about the flexibility and applicability of the linear blending framework.  Additionally, we present an extension of linear blending called ``operator biased linear trajectory blending'' (which formalizes some recent approaches in linear blending such as~\cite{dragan-ijrr-2013}) and show that not only is this also a restrictive special case of our probabilistic approach, but more importantly, is statistically unsound, and thus, mathematically, unsuitable for implementation.  Instead, we suggest a statistically principled approach that guarantees data is used in a consistent manner, and show how this alternative approach converges to the full probabilistic framework.  We conclude by proving that, in general, linear blending is suboptimal with respect to the joint metric of agreeability, safety, and efficiency. 
  
\end{abstract}

\section{Introduction}
\label{sec:introduction}
\noindent Fusing human and machine capabilities has been an active research topic in computer science for decades.  In robotics related applications, this line of research is often referred to as \emph{shared autonomy}.  While shared autonomy has been addressed \emph{in toto}, it can be broken down into more specialized areas of research.  In particular, some researchers focus on the fusing of human and machine ``perception'' (see~\cite{morison-extend}).  Similarly, a great deal of research has focused on fusing human and machine ``decision making'' in the machine learning \cite{welinder-crowdsourcing,tamer}, control theory~\cite{pac-mdp-tlc}, and human robot interaction communities~\cite{goodrich-mixed-initiative}.

Under the umbrella of shared decision making, an even more focussed line of research has emerged: shared control, whereby the moment to moment control commands sent to the platform motors are a synthesis of human input and autonomy input.  Broadly speaking, shared control has been deployed in two cases: shared teleoperation (where the human operator is not co-located with the robot) and onboard shared control (where the human operator is physically on the robot).  Shared teleoperation is used for numerous applications: search and rescue~\cite{murphy-teleop} and extraterrestrial robotics~\cite{jpl-teleop} are two examples.

\begin{figure}[htbp]
  \centering
\includegraphics[scale=0.4]{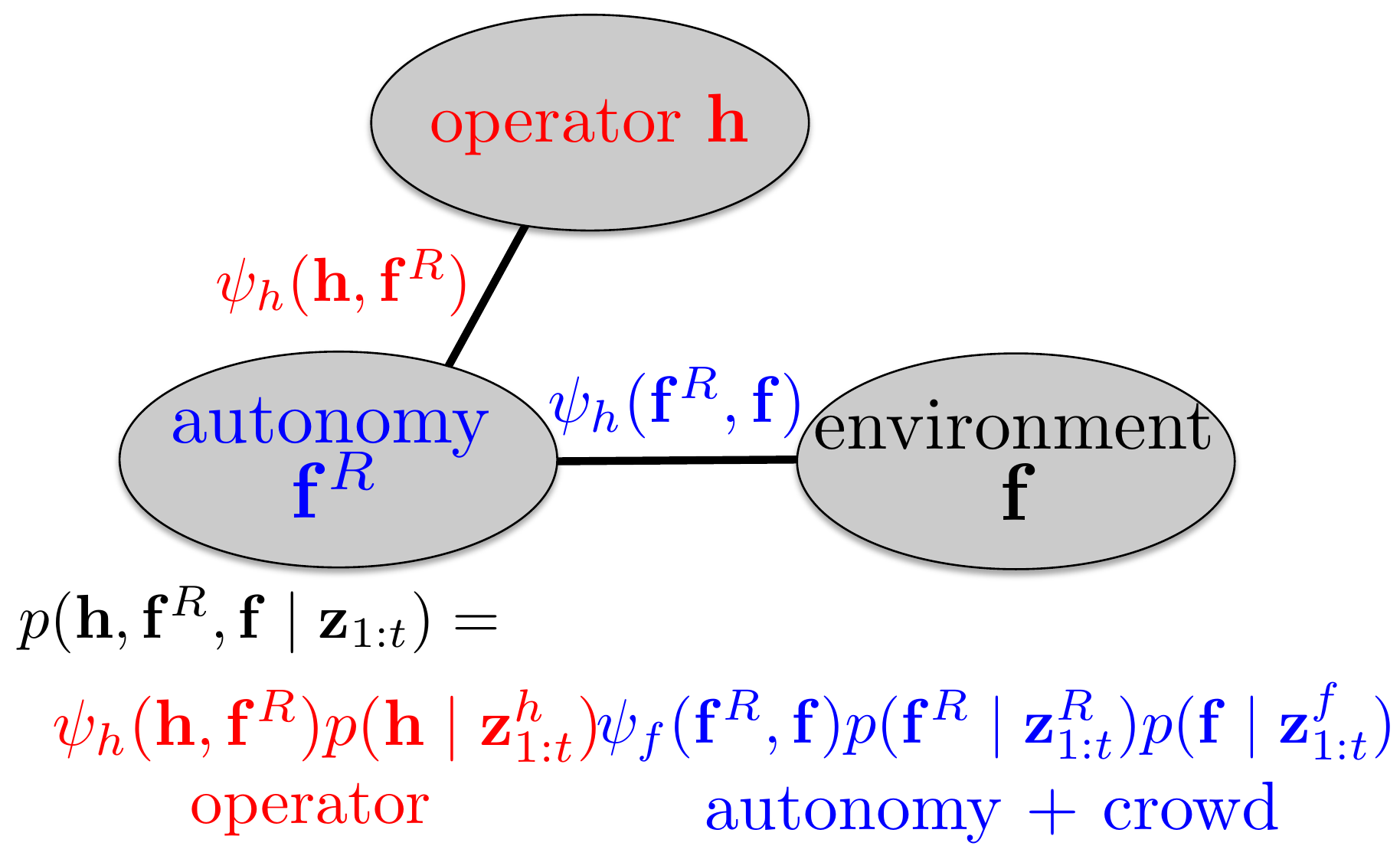}
  \caption{Diagram depicting the relationships of probabilistic shared control.}
  \label{fig:shared-autonomy-diagram}
  \vspace{-0mm}
\end{figure}

Onboard shared control has likewise enjoyed a wide variety of use cases: assistive wheelchair technology~\cite{kuipers-wheelchair}, assistive automobile driving, and assistive manufacturing vehicle operation (e.g., forklifts) are just a few of the examples.    For any of these cases, shared control can be broken down into an autonomous modeling step, a human modeling step, and a human-machine arbitration step (see \cite{dragan-ijrr-2013} for a compelling argument justifying this classification scheme).

In this paper, we explore the probabilistic foundations of onboard shared control in the presence of dynamic obstacles (``the crowd''): Figure~\ref{fig:shared-autonomy-diagram} depicts our approach. In order to do this, we formulate shared control as a random process and describe the joint distribution (over the random operator, autonomy, and crowd functions) that governs behavior and then propose a tractable model captures notions of operator-autonomy agreeability, safety, and efficiency; further, we prove that linear blending is a special case of this approach. In the course of the proof, we observe the following: first, in linear blending, the autonomy reasons \emph{independently} of the operator during optimization, so even if we had a precise operator model, the autonomy would not be informed of this information until after the optimization.  Second, in linear blending, the autonomy is limited to a single optimal decision, which is then averaged with the operator input---this approach leaves no flexibility in how the autonomy might assist the user.  Next, we present an extension of linear blending that can accommodate more than one statistic about the user (e.g., user inferred goal information, preferred trajectories, etc), and find that state of the art approaches to this problem are prone to statistical inconsistency.  We thus present a statistically valid model for how to properly condition the autonomy on user statistics.   We conclude with a section on the optimality (with respect to agreeability, safety, and efficiency) of linear blending and our probabilistic approach.

\section{Related Work}
\label{sec:lin-blend}
\noindent We begin by defining the arbitration step of linear blending:
\begin{align}
\label{eq:lin-blend}
u^s_{LB}(t) = K_hu^h_t + K_Ru^R_{t+1},
\end{align}
where, at time $t$, $u^s_{LB}(t)$ is the linearly blended shared control command sent to the platform actuators, $u^h_t$ is the human operator input (joystick deflections, keyboard inputs, etc.), $u^R_{t+1}$ is the next autonomy command,  and $K_h, K_R$ are the operator and autonomy \emph{arbitration functions}, respectively.  To ensure that the magnitude of $u^s_{LB}(t)$ does not exceed the magnitude of $u^h_t$ or $u^R_{t+1}$, we require that $K_h+K_R = 1.$  We observe that since $K_h = 1 - K_R$, there is only one free parameter in this formulation. 

This linear arbitration model has enjoyed wide adoption in the assistive wheelchair community (\cite{carlson-smc-2012,wang-adaptive-shared-control, lopes-embs-2010, urdiales-autonrobots-2011,yu-autonrobots-2003,peinado-icra-2011, urdiales-nsre-2013,inigo-blasco-isrrobotik-2014}).  Outside of the wheelchair community, shared control path planning researchers have widely adopted Equation~\ref{eq:lin-blend} as a \emph{de-facto} standard protocol, as extensively argued in \cite{dragan-ijrr-2013, draganrss2012} (in~\cite{dragan-ijrr-2013}, it is argued that ``linear policy blending can act as a common lens across a wide range of literature'').  Additionally, the work of \cite{poncela-smc-2009, wang-ras-2005} advocates the broad adoption of a linear arbitration step for shared control.

For the purposes of this article, we describe how each quantity of Equation~\ref{eq:lin-blend} is computed:

1) Collect the data at time $t$: $u^h_{1:t}$ are the historical operator inputs.   $\bfz^R_{1:t}$ are the historical measurements of the state of the robot (such as odometry, localization, SLAM output, etc).  

2) Compute the \emph{autonomous} input $u^R_{t+1}$:  This quantity may be computed using nearly any off the shelf planning algorithm, and is dependent on application.  The ``Dynamic Window Approach'' \cite{dynamic-window} and ``Vector Field Histograms$+$'' \cite{vfh+} are popular approaches to perform obstacle avoidance for wheelchairs.  Sometimes, the autonomy is biased according to data about the operator---for instance, one might imagine an offline training phase where the robot is taught ``how'' to move through the space, and then this data could be agglomerated using, e.g., inverse optimal control.  Alternatively, one might bias the autonomous decision making by conditioning the planner on the predicted or known human goal.


3) Compute the arbitration parameters $K_h$ and $K_R$.  A wide variety of heuristics have been adopted to compute this parameter: confidence in robot trajectory, smoothness, mitigating jerk, operator reliability, user desired trajectories, safeguarding against unsafe trajectories, etc.  Indeed, much of the shared control literature is devoted to developing novel heuristics to compute this parameter. 

4) Compute the shared control $u^s_{LB}(t)$ using Equation~\ref{eq:lin-blend}.

Typically, the data $u^h_{1:t}$ are interpreted literally---no likelihood or predictive model filters this data stream.  In other approaches, operator intention is modeled using a combination of dynamic Bayesian networks or Gaussian mixture models.

For this paper, we adopt the notation $\bfz^h_{t} \doteq u^h_t,$  (that is, we treat operator inputs as \emph{measurements} of the operator \emph{trajectory}, $\bfh \colon t\in\mathbb R \to \mathcal X$, where $\mathcal X$ is the action space).  Similarly, we define measurements $\bfz^R_{1:t}$ of the robot trajectory $\bfr$ and measurements $\bfz^i_{1:t}$ of the $i$'th static or dynamic obstacle trajectory $\bff^i$.  We thus work in the space of distributions over the operator function $\bfh$, autonomy function $\bfr$, and crowd function $\bff = (\bff^1, \ldots, \bff^{n_t})$, measured through $\bfz^f_{1:t}$.   The integer $n_t$ is the number of people in the crowd at time $t$.

We comment on the work of~\cite{leuven-compare-ml-map-pomdp,leuven-icra-2013, leuven-auro}.  In these papers, the authors construct a probabilistic model over user forward trajectories (i.e., a specific and personalized instantiation of $\phuman$).  They then formulate shared control as a \emph{partially observable Markov decision process} (POMDP), in order to capture the effects of robot actions on the probabilistic model of the operator.  However, for tractability of the POMDP, the autonomy is limited to only choosing from the next available state---this makes assistive navigation through crowds impossible (as shown in~\cite{trautmaniros}).  Further, the potential autonomous actions are limited by the input device; in one application, the autonomy was only able to reason over 9 directions.  As we discuss in Section~\ref{sec:operator-autonomy-disagreement}, limiting the autonomy to such a degree can negatively impact performance.
\vspace{-0mm}

\section{Foundations and Implementation of Probabilistic Shared Control}
\label{sec:psc-origins}
\vspace{-0mm}
\noindent As stated in Section~\ref{sec:introduction}, we seek to explore the probabilistic foundations of assistive onboard shared control; we thus posit
\begin{align}
\label{eq:probabilistic-shared-control}
\vspace{-3mm}
u^s_{PSC}(t) &= \bff^{R^*}_{t+1}\nonumber \\
 (\bfh,\bff^{R},\bff)^* &=\argmax_{\bfh, \bfr,\bff} p(\bfh, \bfr,\bff \mid \bfz^h_{1:t}, \bfz^R_{1:t},\bfz^f_{1:t}).
 \vspace{-2mm}
\end{align}
That is, probabilistic shared control (PSC) is the \emph{maximum a-posteriori} (MAP) value of the joint distribution over the operator, autonomy, and crowd.   We suggest this approach partly based on what we have learned from fully autonomous navigation in human crowds~\cite{trautman-ijrr-2015}---that the \emph{interaction} model between the robot and the human is the most important quantity---and partly based on the following: by formulating shared control as the MAP value of a joint probability distribution, we can explore the modeling limitations imposed by linear blending and the performance consequences of these limitations.  

In the remainder of this section, we explain how the approach in Equation~\ref{eq:probabilistic-shared-control} is a natural extension of our previous work in~\cite{trautmanicra2013}, and present a tractable model of the joint distribution over the operator, autonomy, and crowd.   The probabilistic graphical model in Figure~\ref{fig:pgm} guides our derivation.

\begin{defi}
A cooperative human crowd navigation model (as in~\cite{trautmanicra2013}) is described by
\begin{align}
\label{eq:igp}
\pigpshort = \psi(\bfr,\bff) \probot \prod_{i=1}^{n_t}\peye
\end{align}
where $\bar\bfz_{1:t} = [\bfz^R_{1:t}, \bfz^f_{1:t}]$ and $\probot, \peye$ are the robot and crowd individual dynamical prediction functions, respectively.  The robot-crowd interaction function $\psi(\bfr,\bff)$ captures joint collision avoidance: that is, how the robot and the crowd cooperatively move around one another so that the other can pass.  The graphical model is presented on the left hand side of Figure~\ref{fig:pgm}.
\end{defi}

\begin{figure}[htbp]
\vspace{-5mm}
  \centering
\includegraphics[scale=0.27]{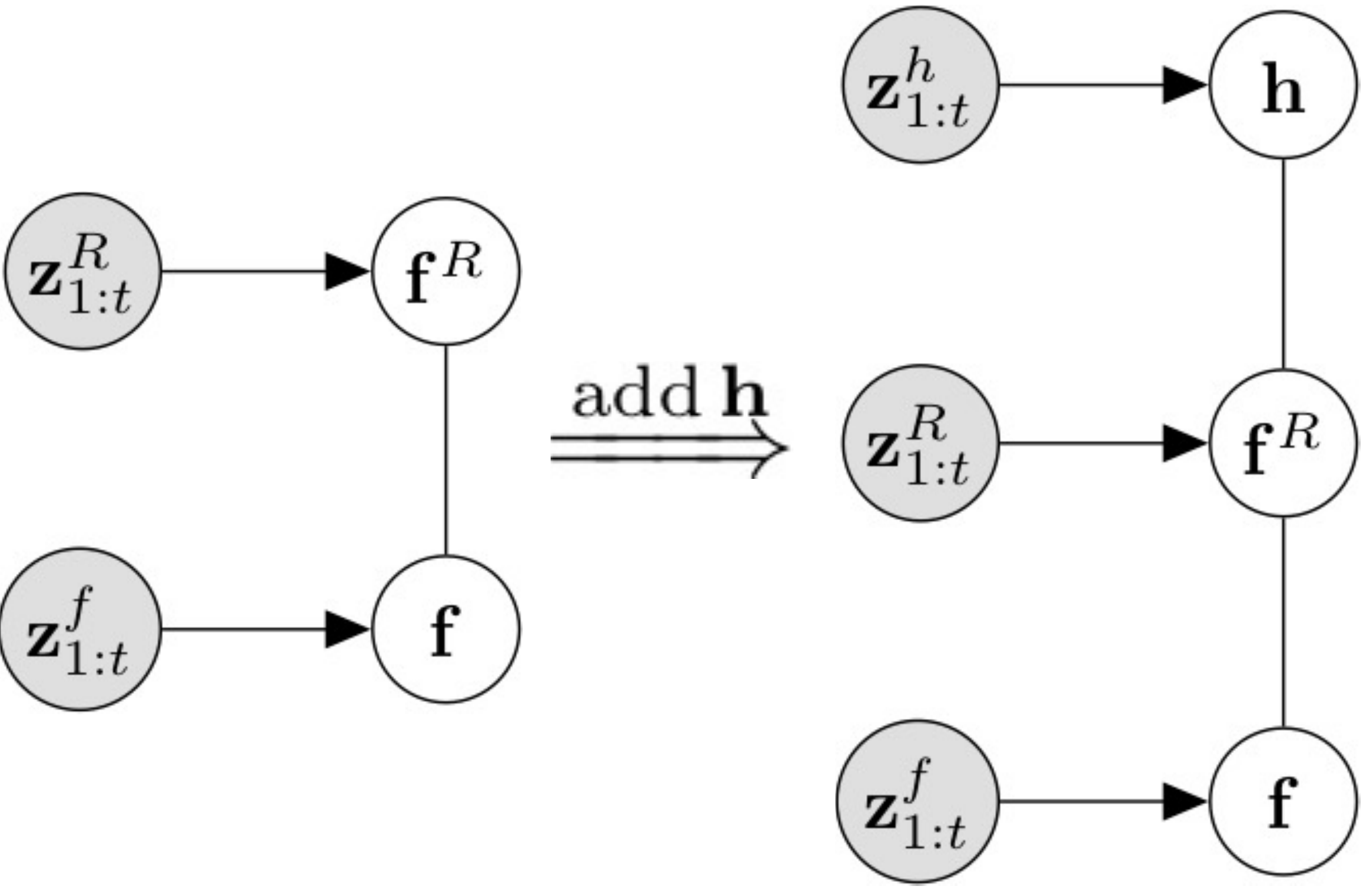}
  \caption{Correspondence between the cooperative model of Equation~\ref{eq:igp} (left hand side of figure) and probabilistic shared control in Equation~\ref{eq:obstacles}.}
  \label{fig:pgm}
  \vspace{-0mm}
\end{figure}

\begin{defi}
A probabilistic shared control (PSC) model in the presence of static or dynamic obstacles is
\begin{align}
\label{eq:obstacles}
\fa &= \psi(\bfh, \bfr) \phuman \pigpshort
\end{align}
where $\phuman$ is the predictive distribution over the operator, $\bfz^h_{1:t}$ is data generated by the operator, and $\psi(\bfh,\bfr)$ is the interaction function between the operator and the robot.  In analogy with~\cite{trautmanicra2013}, the interaction function $\psi(\bfh,\bfr)$ intends to capture how ``agreeable'' the robot trajectory is with the desires of the operator.  In this article, we choose $\psi(\bfh, \bfr) = \exp (-\frac{1}{2\gamma}(\bfh-\bfr)(\bfh-\bfr)^\top),$ where $\gamma$ captures how tightly the autonomy must couple to the operator; it is important to point out that many other reasonable functions could be used here.  Importantly, as shown in Theorem~\ref{thrm:psc-gen-lb}, this is the (implicit) operator-robot interaction function of linear blending.  The graphical model is presented on the right hand side of Figure~\ref{fig:pgm}.  
\end{defi}

The model in Equation~\ref{eq:obstacles} can be understood in the following way: the autonomy (modeled with $\pigpshort$) optimizes over safety and efficiency in crowds---specifically, the robot-crowd interaction function $\psi(\bfr,\bff)$ models how to move \emph{through} a crowd in the safest and most efficient way.  Similarly, the function $\psi(\bfh, \bfr)$ models how ``agreeable'' the robot path $\bfr$ is to the operator path $\bfh$.  By formulating $\bfh$ and $\bfr$ as random functions, flexibility of both parties is modeled, thus enabling a more realistic arbitration (where both parties are willing to compromise).  This idea was captured in~\cite{trautmaniros}: the robot and the crowd were treated as random functions so that they could come to a compromise about how to (cooperatively) pass by one another.

Thus, by choosing the joint $\argmax_{\bfh,\bfr,\bff}$ as in Equation~\ref{eq:probabilistic-shared-control}, we \emph{simultaneously} optimize over three quantities: agreeability, safety, and efficiency.   We suggest that such a joint optimization over safety and efficiency (the domain of the autonomy) and agreeability (adhering to the desires of the operator) is a reasonable way to formulate shared control.  Our analysis in the following sections provides credibility to this presumption.

\section{Linear Blending as Probabilistic Shared Control}
\label{sec:lb-as-psc}
\noindent To understand linear blending as a special case of Equation~\ref{eq:probabilistic-shared-control}, we consider the following conditioning relationship:
\begin{align}
\label{eq:h-condition}
\fa = p(\bfr, \bff \mid \bfz^R_{1:t}, \bfz^f_{1:t}, \bfh)\phuman.
\end{align}
To understand how the linear blending modeling assumptions effect the full joint, we first insert the linear blending \emph{operator} model: $\phuman = \delta(\bfh-\bfz^h_t)$.  Thus,
\begin{align*}
\fa &= p(\bfr, \bff \mid \bfz^f_{1:t}, \bfz^R_{1:t},\bfh) \delta(\bfh-\bfz^h_t) \\
&= p(\bfr, \bff \mid \bfz^f_{1:t}, \bfz^R_{1:t},\bfz^h_t). 
\end{align*}
In this case, 
\begin{align*}
\argmax_{\bfh,\bfr,\bff}\fa = \argmax_{\bfr,\bff}p(\bfr, \bff \mid \bfz^f_{1:t}, \bfz^R_{1:t},\bfz^h_t).
\end{align*}
Because this distribution is already conditioned on $\bfz^h_t$, there is no need for a linear arbitration step (we prove this in Theorem~\ref{thrm:psc-gen-lb}).  Further, by fixing the operator at $\bfz^h_t$, we are no longer \emph{jointly} optimizing over the autonomy and the operator.   

\begin{theorem}[\textbf{Equation~\ref{eq:probabilistic-shared-control} generalizes linear blending}]
\label{thrm:psc-gen-lb}
Let $p(\bfr, \bff \mid \bfz^f_{1:t}, \bfz^R_{1:t},\bfz^h_t) = \psi_h(\bfz^h_t,\bfr_{t+1})\pigpshort$
and use Laplace's Approximation~\cite{bishopbook} to approximate $\pigpshort$, where $\bar\bfr_{t+1}$ is a mode of the distribution:
\begin{align*}
\pigpshort = \mathcal N(\bfr_{t+1} \mid \bar\bfr_{t+1}, \bsigma^R).
\end{align*}
Then $u^s_{PSC}(t) = \bsigma( \frac{1}{\gamma}\bfz^h_t + \frac{1}{\bsigma^R}\bar\bfr)$,  $\bsigma^{-1} = (\gamma^{-1} + (\bsigma^R)^{-1})$, and $\gamma$ is the operator-autonomy attraction parameter.  
\end{theorem}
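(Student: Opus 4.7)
\medskip
\noindent\textbf{Proof proposal.} My plan is to reduce the joint MAP in Equation~\ref{eq:probabilistic-shared-control} to a one-dimensional optimization over $\bfr_{t+1}$ by exploiting the delta operator model and the Laplace approximation hypothesis, then to recognize the maximizer as the mean of a product of two Gaussians, and finally to identify that mean with the linear blending formula \eqref{eq:lin-blend}.

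First I would invoke Equation~\ref{eq:h-condition} together with the linear blending operator model $\phuman=\delta(\bfh-\bfz^h_t)$ (already derived in the paragraph preceding the theorem) to reduce
\[
\argmax_{\bfh,\bfr,\bff}\fa \;=\; \argmax_{\bfr,\bff} p(\bfr,\bff\mid \bfz^f_{1:t},\bfz^R_{1:t},\bfz^h_t).
\]
Using the factorization $p(\bfr,\bff\mid\cdot)=\psi_h(\bfz^h_t,\bfr_{t+1})\,\pigpshort$ assumed in the theorem, the factor $\psi_h$ depends only on $\bfr_{t+1}$, while the Laplace approximation replaces $\pigpshort$ with a Gaussian in $\bfr_{t+1}$ alone. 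Hence the $\argmax$ over $\bff$ is absorbed by the approximation, and the remaining task is to locate the mode in $\bfr_{t+1}$ of
\[
\exp\!\Bigl(-\tfrac{1}{2\gamma}(\bfz^h_t-\bfr_{t+1})(\bfz^h_t-\bfr_{t+1})^\top\Bigr)\,\mathcal N(\bfr_{t+1}\mid\bar\bfr_{t+1},\bsigma^R).
\]

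Next I would observe that the first factor is proportional, as a function of $\bfr_{t+1}$, to $\mathcal N(\bfr_{t+1}\mid \bfz^h_t,\gamma)$. Applying the standard product-of-Gaussians identity (e.g.\ completing the square in the quadratic form of the exponent), the product is proportional to a Gaussian in $\bfr_{t+1}$ whose precision is the sum of the individual precisions and whose mean is the precision-weighted combination of the two centers. This immediately yields
\[
\bsigma^{-1}=\gamma^{-1}+(\bsigma^R)^{-1},\qquad
\bfr_{t+1}^{*}=\bsigma\!\left(\tfrac{1}{\gamma}\bfz^h_t+\tfrac{1}{\bsigma^R}\bar\bfr_{t+1}\right),
\]
and setting $u^s_{PSC}(t)=\bfr_{t+1}^{*}$ matches the stated formula.

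Finally I would show that this expression is exactly \eqref{eq:lin-blend}. Defining $K_h=\bsigma/\gamma$ and $K_R=\bsigma/\bsigma^R$ with $u^h_t=\bfz^h_t$ and $u^R_{t+1}=\bar\bfr_{t+1}$, the precision identity $\bsigma^{-1}=\gamma^{-1}+(\bsigma^R)^{-1}$ gives $K_h+K_R=\bsigma(\gamma^{-1}+(\bsigma^R)^{-1})=1$, so the arbitration normalization required after \eqref{eq:lin-blend} is automatically satisfied. This both recovers linear blending and shows that $\gamma$ plays the role of the coupling weight, as claimed.

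\medskip
\noindent\textbf{Main obstacle.} The algebra (product of Gaussians, completing the square) is routine. The subtle point I would take care with is the reduction from a joint MAP over $(\bfh,\bfr,\bff)$ to a one-dimensional optimization: one must justify that the Laplace approximation, stated as a Gaussian in $\bfr_{t+1}$ only, legitimately represents the marginalization (or profiling) of $\pigpshort$ over $\bff$, so that no information about the crowd is lost in passing to the mode. Once that is accepted, everything else is a direct computation.
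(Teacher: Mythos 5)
Your proposal is correct and follows essentially the same route as the paper: reduce the joint MAP via the delta operator model to an optimization over $\bfr_{t+1}$, recognize $\psi_h(\bfz^h_t,\bfr_{t+1})$ as $\mathcal N(\bfr_{t+1}\mid\bfz^h_t,\gamma)$ up to a constant, and apply the product-of-Gaussians identity to obtain the precision-weighted mean $\bsigma(\frac{1}{\gamma}\bfz^h_t+\frac{1}{\bsigma^R}\bar\bfr)$ with $\bsigma^{-1}=\gamma^{-1}+(\bsigma^R)^{-1}$. Your explicit identification $K_h=\bsigma/\gamma$, $K_R=\bsigma/\bsigma^R$ with $K_h+K_R=1$ is the same correspondence the paper states immediately after its proof, just folded into the argument.
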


\begin{proof}
We compute
\begin{align}
\label{eq:decouple}
u^s_{PSC}(t) &= \argmax_{\bfh,\bfr,\bff} \fa \nonumber\\
& \propto \argmax_{\bfh,\bfr,\bff}\mathcal N(\bfr_{t+1} \mid \bfz^h_t, \gamma) \mathcal N(\bfr_{t+1} \mid \bar\bfr_{t+1}, \bsigma^R) \nonumber\\
& = \argmax_{\bfr_{t+1}} \mathcal N(\bfr_{t+1} \mid \bmu, \bsigma) \\
&=\bsigma( \frac{1}{\gamma}\bfz^h_t + \frac{1}{\bsigma^R}\bar\bfr )\nonumber
\end{align}
where $\bmu = \bsigma( \frac{1}{\gamma}\bfz^h_t + \frac{1}{\bsigma^R}\bar\bfr )$ and
$\bsigma^{-1} = (\gamma^{-1} + (\bsigma^R)^{-1}).$
\end{proof}
To make the correspondence with linear blending explicit, we choose $\bsigma^R = \gamma/K_R-\gamma$ and $\gamma = \bsigma^R/K_h - \bsigma^R$.  Then 
\begin{align*}
\bsigma( \frac{1}{\gamma}\bfz^h_t + \frac{1}{\bsigma^R}\bar\bfr ) &= \frac{\bsigma^R}{\bsigma^R + \gamma}\bfz^h_t + \frac{\gamma}{\bsigma^R+\gamma}\bar\bfr \\
&=K_h\bfz^h_t + K_R \bar\bfr.
\end{align*}
We recall that $K_h+K_R=1$ and so the mixing of $\bsigma_R$ and $\gamma$ is expected.  Only one free parameter exists in linear blending.  

We now examine the how the assumptions of linear blending can effect performance.  The linear blending restriction of the robot-crowd distribution to a unimodal Gaussian during arbitration $p(\bff^{R},\bff \mid \bfz^R_{1:t},\bfz^f_{1:t}) = \mathcal N(\bff^{R}_{t+1} \mid \bar\bfr_{t+1}, \bsigma^R_{t+1})$ can lead to severe restrictions in shared control capability; even though a locally optimal autonomy strategy $\bar\bfr$ is included in the linear blend $u^s_{LB}= K_h\bfz^h_t + K_R\bar\bfr_{t+1}$, there is no guarantee that the human operator will choose this optima, or even a nearby optimum (see Section~\ref{sec:operator-autonomy-disagreement}).   Alternatively, by maintaining the multitude of hypotheses inherent to $\pigpshort$, we greatly increase the possibility that the autonomy will be able to assist the operator in a way that is both desirable and safe.  As an example, linear trajectory blending over a \emph{safe} operator input and a \emph{safe} autonomous input can result in an unsafe shared trajectory.  That is, the weighted average of two safe trajectories can be averaged into an unsafe trajectory.

\section{Conditional Trajectory Blending}
\label{sec:ctb}
\noindent In this section, we extend our definition of linear blending to capture the salient characteristics of recent approaches such as~\cite{dragan-ijrr-2013}.  Broadly speaking, this line of work seeks to include additional data about the operator \emph{before} arbitration.  We show that this is statistically unfounded, and thus introduce a method (conditional trajectory blending) that incorporates operator information in a statistically principled way.

\begin{defi}[\textbf{Extending Equation~\ref{eq:lin-blend}}] 
Let

1) $\bar\bfh \sim p(\bfh \mid \bfz^h_{1:t})$ be defined through time $T>t$, such that $\bar\bfh \colon [1,T]\subset\mathbb R\mapsto \mathcal X,$

2) $\bar\bfr\sim p(\bfr, \bff \mid \bfz^R_{1:t},\bfz^f_{1:t})$, with $\bar\bfr \colon [1,T]\subset\mathbb R\mapsto \mathcal X,$

3) $u^s_{LTB}$ be the shared control $u^s_{LTB} \colon [1,T]\subset\mathbb R\to \mathcal X,$

4) and $K_h, K_R$ be the arbitration functions.
\end{defi}

\begin{defi}[\textbf{LTB}]
\label{def:ltb}
Linear Trajectory Blending (LTB) is defined as 1) Sample $\bar\bfr = \argmax_{\bfr,\bff} p(\bfr, \bff\mid \bfz^R_{1:t}, \bfz^f_{1:t})$, 2) Sample $\bar\bfh = \argmax_{\bfh} \phuman$, and 3) Construct the shared control $u^s_{LTB} =  K_h\bar\bfh +K_R \bar\bfr.$
\end{defi}
By extending to trajectories, we can more easily incorporate operator information into the arbitration step. 
In this vein, we now define an extension of linear trajectory blending that biases the autonomous decision making on operator data (motivated by the approach in~\cite{dragan-ijrr-2013}).

\begin{defi}[\textbf{LTBo}]
\label{def:operator-biased-ltb}
Let $p(G \mid \bfz^h)$ be a distribution about the operator, where $\bfz^h \subseteq \bfz^h_{1:t}$. Then \emph{operator biased linear trajectory blending} (LTBo) is defined as

1) Sample $G_1\sim p(G \mid \bfz^h)$.  $G_1$ could be a trajectory, a waypoint, a goal, or any other relevant quantity.

2) Sample the \emph{operator biased} distribution $$\bar\bfr_{\bfh} = \argmax_{\bfr,\bff} p(\bfr, \bff\mid \bfz^R_{1:t}, \bfz^f_{1:t}, G_1).$$

3) Sample the operator trajectory $\bar\bfh = \argmax_{\bfh} \phuman$.

4) Construct the shared control $u^s_{LTBo} = K_h\bar\bfh +K_R \bar\bfr_{\bfh}.$
\end{defi}
With these definitions, we extend the approach of Equation~\ref{eq:lin-blend} to include a model of the operator $\phuman$ and enable seeding the autonomy with operator information.  By modeling the operator with a distribution, we potentially bypass the issue of noisy inputs leading to ``jittery'' linear blends.  By seeding the autonomy with operator statistics, we might drive $u^s_{LTBo}$ towards solutions more closely aligned with user desire.  

While this approach is \emph{sensible}, no \emph{principles} motivate such an extension.  We thus pause to examine this approach in the context of the full joint distribution.  In particular, recall Equation~\ref{eq:h-condition}; if we only sample the single operator statistic $\bar\bfh = \argmax_{\bfh} \phuman$, then we are implicitly making the assumption that $\phuman = \delta(\bfh-\bar\bfh)$; thus we have
\begin{align*}
\fa &= p(\bfr, \bff \mid \bfz^f_{1:t}, \bfz^R_{1:t},\bar\bfh). 
\end{align*}
In this case, 
\begin{align*}
\argmax_{\bfh,\bfr,\bff}\fa = \argmax_{\bfr,\bff}p(\bfr, \bff \mid \bfz^f_{1:t}, \bfz^R_{1:t},\bar\bfh),
\end{align*}
and so (as before) there is no \emph{need} for a linear arbitration step---we have already conditioned the autonomy on the operator, and we can recover linear blending by choosing $p(\bfr, \bff \mid \bfz^f_{1:t}, \bfz^R_{1:t},\bar\bfh)$ as in Theorem~\ref{thrm:psc-gen-lb}.  If we have a \emph{separate} model $p(G \mid \bfz^h)$ about the operator, we know that it does not contain information beyond what is available in $\phuman$, since $\bfz^h \subseteq \bfz^h_{1:t}$ (see the discussion on the ``data processing inequality'' in~\cite{soatto-actionable}); worse, $p(G \mid \bfz^h)$ can lead to statistical inconsistencies (see Lemma~\ref{lem:ltb-unsound}).  We now show how to resolve this inconsistency and how to combine multiple operator data points in a statistically sound manner.

\begin{defi}[\textbf{Conditional Trajectory Blending}]
\label{def:conditional-arbitrary} Assume that we have the distributions $\phuman$ and $p(\bfr, \bff \mid \bfz^f_{1:t}, \bfz^R_{1:t},\bfh_b)$.  Then let $\{\bfh^b\}_{b=1}^{N_h} \sim \phuman$ be a collection of $N_h$ samples of the operator model.  If we take the model of the operator to be $\phuman = \sum_{b=1}^{N_h}w^b \delta(\bfh-\bfh^b),$ then
\begin{align}
\label{eq:ctb-arbitrary}
\fa &=p(\bfr, \bff \mid \bfz^f_{1:t}, \bfz^R_{1:t},\bfh)\sum_{b=1}^{N_h}w^b \delta(\bfh-\bfh^b) \nonumber \\
&=\sum_{b=1}^{N_h}w^b p(\bfr, \bff \mid \bfz^f_{1:t}, \bfz^R_{1:t},\bfh^b)
\end{align}
where $w^b = p(\bfh^b=\bfh \mid \bfz^h_{1:t})$ is the probability of sample $\bfh^b$.
We interpret the shared control to be $$u^s_{CTB} = \argmax_{ \bfr,\bff} \sum_{b=1}^{N_h}w^b p(\bfr, \bff \mid \bfz^f_{1:t}, \bfz^R_{1:t},\bfh^b).$$
\end{defi}
In particular, we revisit the case of LTBo: suppose that we sample $G_1\sim \phuman$---$G_1$ is present in this distribution since it contains all the data---and then sample $\bar\bfh\sim\phuman$.  Then conditional trajectory blending tells us that we should find the $\argmax$ of the distribution
\begin{multline*}
u^s_{CTB} = \argmax_{\bfr,\bff}\Big[w^1 p(\bfr, \bff \mid \bfz^f_{1:t}, \bfz^R_{1:t},G_1) +\\
 w^2 p(\bfr, \bff \mid \bfz^f_{1:t}, \bfz^R_{1:t},\bar\bfh)\Big].
\end{multline*}
In this case, then, $u^s_{CTB} \neq u^s_{LTBo}$; however, since conditional trajectory blending is derived directly from the full joint we know that it is combining the data $G_1$ and $\bar\bfh$ in a statistically sound manner. The next theorem provides information about the limiting behavior of conditional trajectory blending.
\begin{theorem}[\textbf{CTB approximates PSC}]
\label{thrm:cond-blend-igp}
As the number of operator samples tends to infinity, probabilistic shared control (Equation~\ref{eq:probabilistic-shared-control}) is recovered.
\end{theorem}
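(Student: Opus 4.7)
The plan is to view $u^s_{CTB}$ as a particle-based approximation of the full joint $\fa$ appearing in Equation~\ref{eq:probabilistic-shared-control}, and then invoke the consistency of such weighted empirical representations as $N_h \to \infty$.

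First, I would start from the conditioning identity Equation~\ref{eq:h-condition},
\begin{equation*}
\fa = p(\bfr, \bff \mid \bfz^f_{1:t}, \bfz^R_{1:t}, \bfh)\,\phuman,
\end{equation*}
which is exactly the structure Definition~\ref{def:conditional-arbitrary} exploits: it substitutes the continuous operator marginal $\phuman$ by the weighted empirical measure $\sum_{b=1}^{N_h} w^b \delta(\bfh - \bfh^b)$ with $\bfh^b \sim \phuman$. Inserting this into the factored joint and formally integrating out $\bfh$ produces precisely the CTB mixture in Equation~\ref{eq:ctb-arbitrary}. In other words, the CTB objective is already the $\bfh$-marginal of an \emph{approximated} joint posterior whose only approximation is the particle representation of the operator marginal.

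Second, I would invoke the standard consistency of the weighted empirical measure. Under the usual importance-sampling assumption that $\{\bfh^b\}$ are drawn from (or reweighted to match) $\phuman$, the empirical measure converges weakly to $\phuman$ as $N_h \to \infty$ by the strong law of large numbers. Equivalently, for each fixed $(\bfr, \bff)$,
\begin{equation*}
\sum_{b=1}^{N_h} w^b\, p(\bfr, \bff \mid \bfz_{1:t}, \bfh^b) \;\longrightarrow\; \int p(\bfr, \bff \mid \bfz_{1:t}, \bfh)\,\phuman\, d\bfh,
\end{equation*}
so in the limit the CTB objective equals the exact $\bfh$-marginal of $\fa$. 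Under mild regularity on the conditional in $\bfh$ (continuity in $\bfh$ together with tightness or compactness of the trajectory space), pointwise convergence lifts to locally uniform convergence, and an argmax-continuity / Maximum Theorem argument then yields $u^s_{CTB} \to u^s_{PSC}(t)$, recovering Equation~\ref{eq:probabilistic-shared-control}.

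The main obstacle is the last implication: pointwise Monte Carlo convergence of the sum of densities does \emph{not} automatically imply convergence of its argmax, so one must explicitly assume enough regularity (continuity of $p(\bfr, \bff \mid \bfz_{1:t}, \bfh)$ in $\bfh$, plus either compact support for the trajectory space or a uniform-in-$(\bfr,\bff)$ variance bound for the particle estimator) to upgrade to uniform convergence and deploy $\Gamma$-convergence. Equivalently, one must ensure that the CTB mixture peaks near the same mode as the joint MAP of Equation~\ref{eq:probabilistic-shared-control}; this is immediate in the unimodal regime underlying Theorem~\ref{thrm:psc-gen-lb}, but requires extra care when $\phuman$ or the robot--crowd conditional are multimodal, since in general a marginal MAP and a joint MAP need not coincide.
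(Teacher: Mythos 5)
Your first paragraph is exactly the paper's own proof: the paper simply writes $\phuman = \sum_{b=1}^{\infty} w^b\delta(\bfh-\bfh^b)$, inserts this into the conditioning identity $\fa = p(\bfr,\bff\mid\bfz^R_{1:t},\bfz^f_{1:t},\bfh)\,\phuman$, and stops once the resulting expression $\sum_b w^b\, p(\bfr,\bff\mid\bfz^R_{1:t},\bfz^f_{1:t},\bfh^b)$ visibly has the CTB form; no convergence statement of any kind is made. Everything in your second and third paragraphs is therefore additional to the paper: the law-of-large-numbers consistency of the weighted empirical measure, the observation that pointwise convergence of the mixture objective does not by itself give convergence of its maximizer, and the regularity (locally uniform convergence plus an argmax-continuity or Maximum Theorem argument) needed to actually conclude $u^s_{CTB}\to u^s_{PSC}(t)$. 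That extra work is what a proof of the theorem as stated really requires, so your route is strictly more careful than the paper's purely formal delta-mixture manipulation. Your closing caveat is also a genuine issue the paper glosses over: Equation~\ref{eq:probabilistic-shared-control} defines PSC as the \emph{joint} $\argmax$ over $(\bfh,\bfr,\bff)$, whereas the $N_h\to\infty$ limit of the CTB objective is the $\bfh$-marginal $p(\bfr,\bff\mid\bfz^h_{1:t},\bfz^R_{1:t},\bfz^f_{1:t})$, whose maximizer need not coincide with the $(\bfr,\bff)$ component of the joint MAP once $\phuman$ is non-degenerate; the paper's algebra hides this because the delta functions are silently integrated out, so in the multimodal regime the stated recovery of PSC holds only under additional assumptions of exactly the kind you flag.
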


\begin{proof}
Representing $\phuman = \sum_{b=1}^{\infty} w^b \delta(\bfh -\bfh^b),$
\begin{align*}
&\fa  =p(\bfr, \bff \mid \bfz^R_{1:t}, \bfz^f_{1:t}, \bfh)\phuman \\
 &=p(\bfr, \bff \mid \bfz^R_{1:t}, \bfz^f_{1:t}, \bfh)\sum_{b=1}^{\infty} w^b \delta(\bfh -\bfh^b) \\
 &=\sum_{b=1}^{\infty}w^bp(\bfr, \bff \mid \bfz^R_{1:t}, \bfz^f_{1:t}, \bfh^b).
\end{align*}
\end{proof}

It is important to emphasize that conditioning the autonomy on operator statistics and then averaging $\bar\bfr_\bfh$ with a separate operator statistic $\bar\bfh$ is not just unnecessary, but potentially statistically unsound as well, since it is unclear how such an approach originates from Equation~\ref{eq:h-condition}.   

\begin{lemma}[\textbf{LTBo statistically unsound}]
\label{lem:ltb-unsound}
LTBo is not guaranteed to incorporate data in a statistically sound manner. 
\end{lemma}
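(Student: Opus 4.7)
The plan is to trace LTBo step-by-step back to the joint distribution $\fa$ in Equation~\ref{eq:h-condition} and to exhibit two mutually inconsistent implicit operator models that its output simultaneously requires. Following the derivation in Section~\ref{sec:lb-as-psc}, the only principled route from $\fa$ to an ``operator-biased autonomy'' conditional $p(\bfr, \bff\mid \bfz^R_{1:t}, \bfz^f_{1:t}, \hat\bfh)$ passes through the delta operator assumption $\phuman = \delta(\bfh - \hat\bfh)$. Step~2 of Definition~\ref{def:operator-biased-ltb} therefore tacitly commits to $\hat\bfh = G_1$, while step~4, via the derivation of Theorem~\ref{thrm:psc-gen-lb}, tacitly commits to $\hat\bfh = \bar\bfh$. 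For generic operator data these two samples disagree, so no single choice of operator model reproduces LTBo as an $\argmax$, conditional, or marginal of $\fa$.

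Next I would block the tempting escape that $p(G\mid \bfz^h)$ is a separate, legitimate source of operator information. Since $\bfz^h\subseteq \bfz^h_{1:t}$, the data processing inequality cited through~\cite{soatto-actionable} guarantees that $G_1$ is a lossy summary of the same data already consumed by $\phuman$, so $\delta(\bfh - G_1)$ and $\delta(\bfh - \bar\bfh)$ are two different distributional claims about the same random $\bfh$ derived from the same data; they cannot both be imposed in a single statistically consistent model.

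To concretize the inconsistency I would construct a minimal multi-goal counterexample: let the operator data support two goal hypotheses of roughly equal posterior weight; choose $G_1$ as the first goal (so $\bar\bfr_\bfh$ is driven toward it) and let $\bar\bfh$ be a trajectory toward the second goal. LTBo then returns a shared control that partially commits the autonomy to goal one while linearly averaging with an input headed for goal two. No factorization of $\fa$---neither conditioning on $G_1$, nor conditioning on $\bar\bfh$, nor any mixture over $\bfh$ as in conditional trajectory blending---produces this output, so LTBo has used the same underlying $\bfz^h_{1:t}$ twice under incompatible models, which is the form of statistical unsoundness the lemma claims.

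The main obstacle is pinning down ``statistically unsound'' precisely. The crisp formulation I would adopt is that LTBo's output cannot be written as any $\argmax$, marginal, or conditional of the full joint $\fa$ under a single operator model $\phuman$; the two-implicit-models decomposition above is exactly the demonstration that this fails whenever $G_1\neq\bar\bfh$. A minor subtlety worth flagging is the degenerate case $G_1 = \bar\bfh$, where LTBo collapses to LTB and is recovered as a special case of PSC by Theorem~\ref{thrm:psc-gen-lb}; the lemma is therefore really about the generic regime where the two operator statistics disagree, and the counterexample above shows that this regime is non-empty.
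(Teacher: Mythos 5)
Your route is genuinely different from the paper's, and the case you set aside as a ``minor subtlety'' is in fact the paper's entire proof. The paper argues unsoundness precisely in the situation where the same statistic $\bar\bfh$ is drawn both from $p(G\mid\bfz^h)$ and as $\argmax_{\bfh}\phuman$: the autonomy is biased to produce $\bar\bfr_{\bfh}$ (so $\bar\bfh$ has already been consumed through the conditioning) and then $\bar\bfh$ is added \emph{again} through the $K_h\bar\bfh$ term of $K_h\bar\bfh+K_R\bar\bfr_{\bfh}$; the data is counted twice, and there is no principled way to discount $K_h$ to undo this, whereas in CTB a twice-drawn sample merely doubles a weight, $\argmax_{\bfr,\bff}\big[2w^1p(\bfr,\bff\mid\bfz^f_{1:t},\bfz^R_{1:t},\bar\bfh)\big]$. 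Your claim that when $G_1=\bar\bfh$ ``LTBo collapses to LTB and is recovered as a special case of PSC by Theorem~\ref{thrm:psc-gen-lb}'' is wrong: LTB's autonomy term $\bar\bfr$ is \emph{not} conditioned on the operator, and the construction in Theorem~\ref{thrm:psc-gen-lb} conditions on the operator datum once and then takes the MAP without averaging it in a second time; LTBo in this case does both, which is exactly the overuse the paper exhibits. So your proposal mislabels the paper's canonical counterexample as benign.

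That said, your generic-case argument ($G_1\neq\bar\bfh$: step~2 of Definition~\ref{def:operator-biased-ltb} implicitly imposes $\phuman=\delta(\bfh-G_1)$ while step~4 imposes $\phuman=\delta(\bfh-\bar\bfh)$, two incompatible claims about the same $\bfh$ derived from the same data $\bfz^h_{1:t}$, so LTBo cannot be obtained from the joint in Equation~\ref{eq:h-condition} under a single operator model) is in the spirit of the paper's surrounding discussion (``it is unclear how such an approach originates from Equation~\ref{eq:h-condition}'') and does suffice for the lemma's weak claim that soundness is not \emph{guaranteed}. Its soft spot is the assertion that \emph{no} operator model whatsoever reproduces LTBo as an $\argmax$, marginal, or conditional of the joint---you only rule out the delta route; at the paper's level of informality this is tolerable, but it is asserted rather than shown. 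Net assessment: your model-inconsistency argument is an acceptable alternative proof of the lemma, but you should strike the claim about the $G_1=\bar\bfh$ case and instead recognize it as the double-counting instance on which the paper's own proof rests.
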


\begin{proof}
Suppose that one were to sample $\bar\bfh \sim p(G\mid \bfz^h)$, then sample $\bar\bfh = \argmax_{\bfh} \phuman$, then compute $\bar \bfr_\bfh$, and then compute $K_h\bar\bfh +K_R \bar\bfr_\bfh$.  Since we have incorporated $\bar\bfh$ \emph{twice} in the linear blend, the data has been overused.  One could potentially compensate for this by ``removing'' the effect of double usage of $\bar\bfh$ in $K_h$, but it is unclear how to do this in a statistically sound manner.  

Even if $\bar\bfh$ is sampled twice in CTB, it is only counted once: $u^s_{CTB} = \argmax_{\bfr,\bff} \big[2w^1p(\bfr, \bff \mid \bfz^f_{1:t}, \bfz^R_{1:t},\bar\bfh)\big].$
\end{proof}

Thus, if we have a model of $p(\bfr, \bff \mid \bfz^R_{1:t}, \bfz^f_{1:t}, \bfh^b)$ and a model of the operator $\phuman,$ we have a clear mandate for how to correctly formulate shared control.

\section{Optimality of Shared Control}
\label{sec:operator-autonomy-disagreement}
\noindent We start with the following Gaussian sum approximations:
$\phuman = \sum_{m=1}^{N_h} \alpha_m\mathcal N(\bfh \mid \bmu_m, \bSigma_m)$
and $\pigpshort =\sum_{n=1}^{N_R} \beta_n \mathcal N(\bfr \mid \bmu_n, \bSigma_n).$
Then we have that
\begin{align}
\label{eq:gauss-sum}
&\psi_h(\bfh,\bff^{R})\phuman \pigpshort \nonumber\\
&=\psi_h(\bfh,\bff^{R}) \sum_{m=1}^{N_h} \alpha_m\mathcal N(\bfh \mid \bmu_m, \bSigma_m)\sum_{n=1}^{N_R}\beta_n \mathcal N(\bfr \mid \bmu_n, \bSigma_n),
\end{align}
and as $N_h, N_R \to \infty$, we recover the densities $\phuman$ and $\pigpshort$---see~\cite{gauss-sum}.  We note that the approximation $\sum_{n=1}^{N_R}\beta_n \mathcal N(\bfr \mid \bmu_n, \bSigma_n)$ consists of only safe modes, since $\psi_f(\bfr,\bff)$ will assign near zero $\beta$ to any unsafe modes.  This is not true for the operator Gaussian mixture, since the operator may make choices that place the platform on a collision course.  
\begin{theorem}[\textbf{LTB suboptimal}]
\label{thrm:ltb-suboptimal}
If either $\phuman$ or $\pigpshort$ is multimodal, then $u_{LTB}^s$ is suboptimal with respect to agreeability, safety, and efficiency.
\end{theorem}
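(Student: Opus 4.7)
The natural interpretation of optimality with respect to the joint metric of agreeability, safety, and efficiency is the value of the joint density $J(\bfh,\bfr,\bff) := \psi_h(\bfh,\bfr)\phuman\pigpshort$ from Equation~\ref{eq:obstacles}, since $\psi_h$ encodes agreeability, $\phuman$ encodes operator fidelity, and $\pigpshort$ encodes safety and efficiency through the embedded crowd interaction $\psi(\bfr,\bff)$. With this criterion $u^s_{PSC}$ is optimal by construction, so the theorem reduces to showing that, whenever either density is multimodal, $u^s_{LTB}\neq u^s_{PSC}$ and $J$ is strictly smaller at the LTB blend.

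The plan is to exploit the Gaussian sum decomposition already set up in Equation~\ref{eq:gauss-sum}. Substituting the mixtures into $J$ yields
\begin{align*}
J(\bfh,\bfr,\bff)=\psi_h(\bfh,\bfr)\sum_{m=1}^{N_h}\sum_{n=1}^{N_R}\alpha_m\beta_n\,\mathcal N(\bfh\mid\bmu_m,\bSigma_m)\,\mathcal N(\bfr\mid\bmu_n,\bSigma_n).
\end{align*}
Applying the Laplace-style manipulation from Theorem~\ref{thrm:psc-gen-lb} to each $(m,n)$ summand in isolation produces a local peak in $(\bfh,\bfr)$ whose height scales as $\alpha_m\beta_n\psi_h(\bmu_m,\bmu_n)$. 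Therefore $u^s_{PSC}$ effectively selects the pair $(m^\star,n^\star)$ jointly maximizing this product and returns the Gaussian-blended mean of that pair. By contrast, $u^s_{LTB}$ is determined by the index $m_0=\argmax_m\alpha_m$ of the operator MAP and the independently chosen $n_0=\argmax_n\beta_n$ of the robot MAP, averaged via $K_h$ and $K_R$ with no reference whatsoever to $\psi_h$.

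Next I would construct a generic counterexample that witnesses strict suboptimality. Build an operator mixture whose top mode $\bmu_{m_0}$ disagrees with the robot top mode $\bmu_{n_0}$ --- i.e.\ $\|\bmu_{m_0}-\bmu_{n_0}\|$ is large relative to $\gamma$ so that $\psi_h(\bmu_{m_0},\bmu_{n_0})\approx 0$ --- while admitting a secondary pair $(\bmu_{m_1},\bmu_{n_1})$ with $\psi_h(\bmu_{m_1},\bmu_{n_1})\approx 1$ and nonnegligible weights. Then PSC selects $(m_1,n_1)$ and attains strictly larger $J$ than LTB at $K_h\bmu_{m_0}+K_R\bmu_{n_0}$, which by the displayed expansion sits in a near-zero region of the joint. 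The safety component of the claim rests on the observation immediately preceding the theorem: every mode of $\pigpshort$ is safe because $\psi(\bfr,\bff)$ suppresses unsafe modes, so $u^s_{PSC}$ inherits safety by landing on one of them; whereas the convex blend of a safe robot mode with an operator MAP need not lie in a mode of either mixture and --- as already noted in Section~\ref{sec:lb-as-psc} --- can be outright unsafe even though both constituents are individually safe.

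The main obstacle is that ``suboptimal with respect to agreeability, safety, and efficiency'' is not formalized in the paper as a single scalar; the proof depends on accepting $J$ as the unified optimality criterion, after which the suboptimality of LTB is structural. The remaining technical work is then routine: verify that whenever $\phuman$ or $\pigpshort$ has at least two well-separated modes, the independent selection $(m_0,n_0)$ need not coincide with the joint optimizer $(m^\star,n^\star)$, and the strict inequality $J(u^s_{PSC})>J(u^s_{LTB})$ follows because LTB ignores the interaction $\psi_h$ that governs mode compatibility --- exactly the interaction that PSC is designed to exploit.
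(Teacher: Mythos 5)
Your proposal is correct and follows essentially the same route as the paper: it uses the joint density from Equation~\ref{eq:obstacles} as the optimality criterion, expands it via the Gaussian-sum decomposition of Equation~\ref{eq:gauss-sum}, invokes Theorem~\ref{thrm:psc-gen-lb} for the unimodal case, and observes that LTB selects the operator and autonomy modes independently of $\psi_h$, so with well-separated modes the blend can land in a low-probability (and possibly unsafe) region while PSC selects the jointly compatible mode pair. Your explicit counterexample construction simply makes concrete what the paper argues through its $1/Z_1$ versus $1/Z_2$ computation and the accompanying figures, so there is no substantive difference in approach.
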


\begin{proof}
Let $\bar\bfh = \argmax_{\bfh}\phuman$ and $\bar\bfr = \argmax_{\bfr,\bff}\pigpshort$.  Then we write Equation~\ref{eq:gauss-sum} as
\begin{multline*}
\psi_h(\bfh,\bff^{R}) \sum_{m=1}^{N_h} \alpha_m\mathcal N(\bfh \mid \bmu_m, \bSigma_m)\sum_{n=1}^{N_R}\beta_n \mathcal N(\bfr \mid \bmu_n, \bSigma_n) \\
= \psi_h(\bfh,\bff^{R}) \Big(\alpha^*\delta(\bfh-\bar\bfh)\beta^*\mathcal N(\bfr \mid \bar\bfr, \bSigma^*)+ \\
\sum_{m\neq *}^{N_h} \alpha_m\mathcal N(\bfh \mid \bmu_m, \bSigma_m)\sum_{n\neq *}^{N_R}\beta_n \mathcal N(\bfr \mid \bmu_n, \bSigma_n) \Big)
\end{multline*}
where we extracted the largest mixture components, and rewrote the summation using the notation $m \neq *$ to indicate that all the modes except the largest are present.  If $\phuman$ and $\pigpshort$ are unimodal, then we recover linear blending as the optimally agreeable, safe, and efficient solution (Theorem~\ref{thrm:psc-gen-lb}).   In general, however, the $\argmax_{\bfh,\bfr,\bff}$ does not correspond to the linear blend.     

We point out that this result is \emph{independent of how $\psi(\bfh,\bfr)$ is chosen}.  In other words, for any definition of ``agreeability'', the suboptimality of linear blending still holds. 
\end{proof}  

\begin{figure}[h]
  \centering
\includegraphics[scale=0.36]{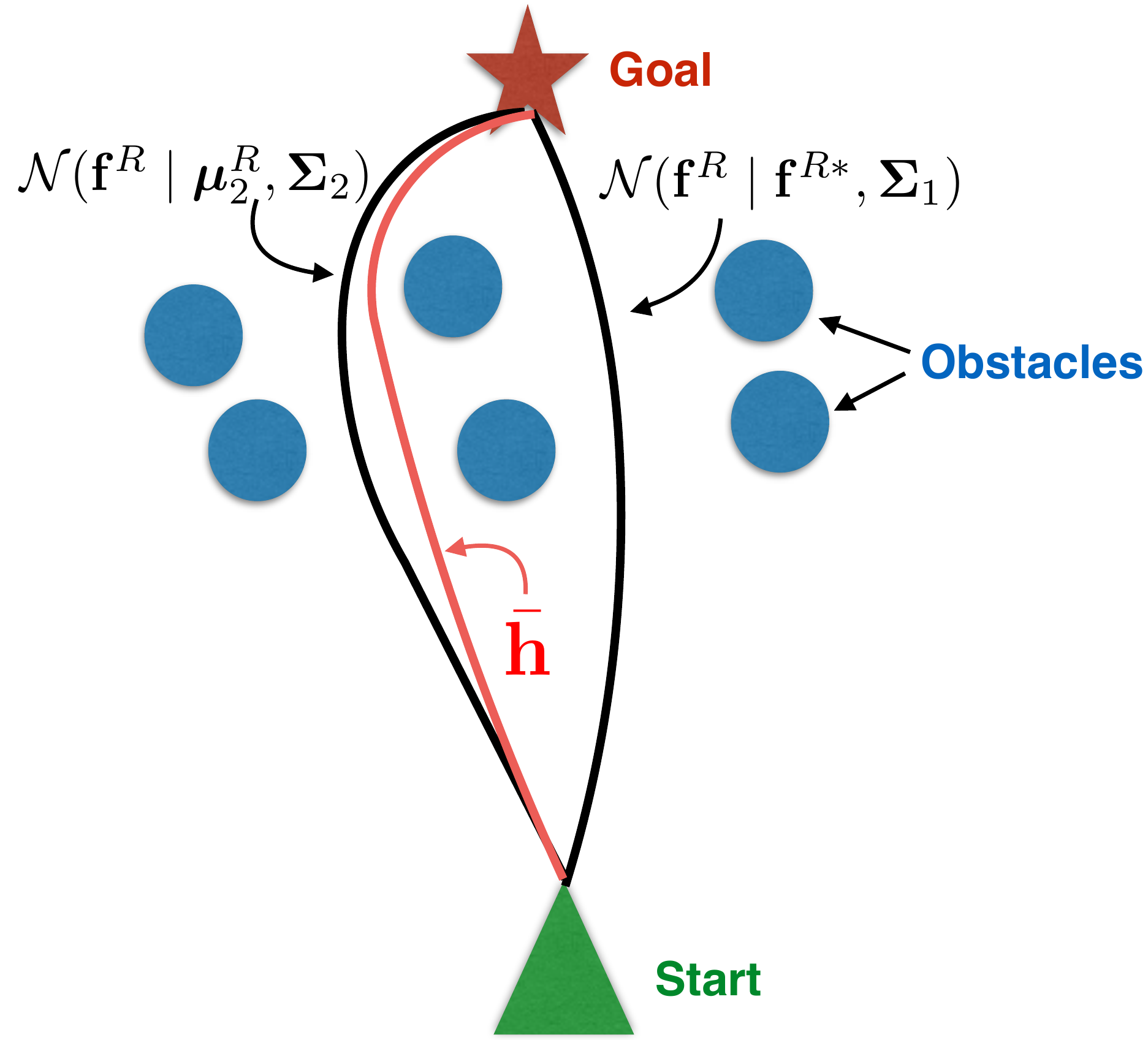}
  \caption{One global autonomy optima at $\bff^{R*}$ and a safe but suboptimal autonomy mode at $\bmu^R_2$ through some obstacle field (additional autonomous modes exist but we leave them off for clarity).  The operator's unimodal predicted trajectory at $\bar\bfh$ is \emph{safe}.  Covariance functions removed for clarity.}
  \label{fig:two-mode-autonomy}
\end{figure}
To understand why this is important, first consider the illustration in Figure~\ref{fig:two-mode-autonomy}.  Suppose that there are two autonomous safe modes through the obstacle field: $\mathcal N(\bfr \mid \bff^{R*}, \bSigma_1)$ and $\mathcal N(\bfr \mid \bmu^R_2, \bSigma_2).$  Thus
\begin{align*}
&\psi_h(\bfh,\bff^{R})\phuman \pigpshort \\
&=\psi_h(\bar\bfh,\bff^{R})[ \beta_1\mathcal N(\bfr \mid \bff^{R*}, \bSigma_1) +  \beta_2\mathcal N(\bfr \mid \bmu_2^R, \bSigma_2)] \\
&= \frac{\beta_1}{Z_1}\mathcal N(\bfr \mid \bar\bff^{R*}, \bar\bSigma_1) +  \frac{\beta_2}{Z_2}\mathcal N(\bfr \mid \bar\bmu_2^R, \bar\bSigma_2)
\end{align*}
where $\beta_1>\beta_2$ (the first mode is the global optima). However
\begin{align*}
\frac{1}{Z_1} &\propto \exp\left( -\frac{1}{2} \left(\bar\bfh - \bff^{R*}\right)^{\top}
\left(\gamma + \bSigma_1\right)^{-1}\left(\bar\bfh - \bff^{R*}\right) \right)\\
\frac{1}{Z_2} &\propto \exp\left( -\frac{1}{2} \left(\bar\bfh - \bmu_2\right)^{\top} 
\left(\gamma + \bSigma_2\right)^{-1}\left(\bar\bfh - \bmu_2\right) \right),
\end{align*}
and so $1/Z_1$ is exponentially smaller than $1/Z_2$ since $\bar\bfh - \bff^{R*}$ is much larger than $\bar\bfh - \bmu_2.$  Thus, the probabilistic shared control in this situation is very close to both $\bmu^R_2$ and $\bar\bfh.$

Conversely, $u^s_{LTB} = K_h\bar\bfh + K_R\bff^{R*}.$  Here, if $K_h$ is close to $K_R,$ then $u^s_{LTB}$ is unsafe.  If $K_R \gg K_h,$ then the autonomy overrides the operator's (safe) choice.   If $K_h \gg K_R$, then the operator is controlling the platform, and is thus not being assisted.  This begs the following question: what heuristic should be employed to choose $K_h$ and $K_R$?  By carrying multiple modes (as in probabilistic shared control), \emph{we bypass this dilemma}, since heuristics are never invoked.  Basic rules of probability theory (namely, the normalizing factor) determine the best choice.  In other words, probabilistic shared control is able to determine the shared control in a data driven way rather than through anecdote.

\begin{figure}[h]
  \centering
\includegraphics[scale=0.36]{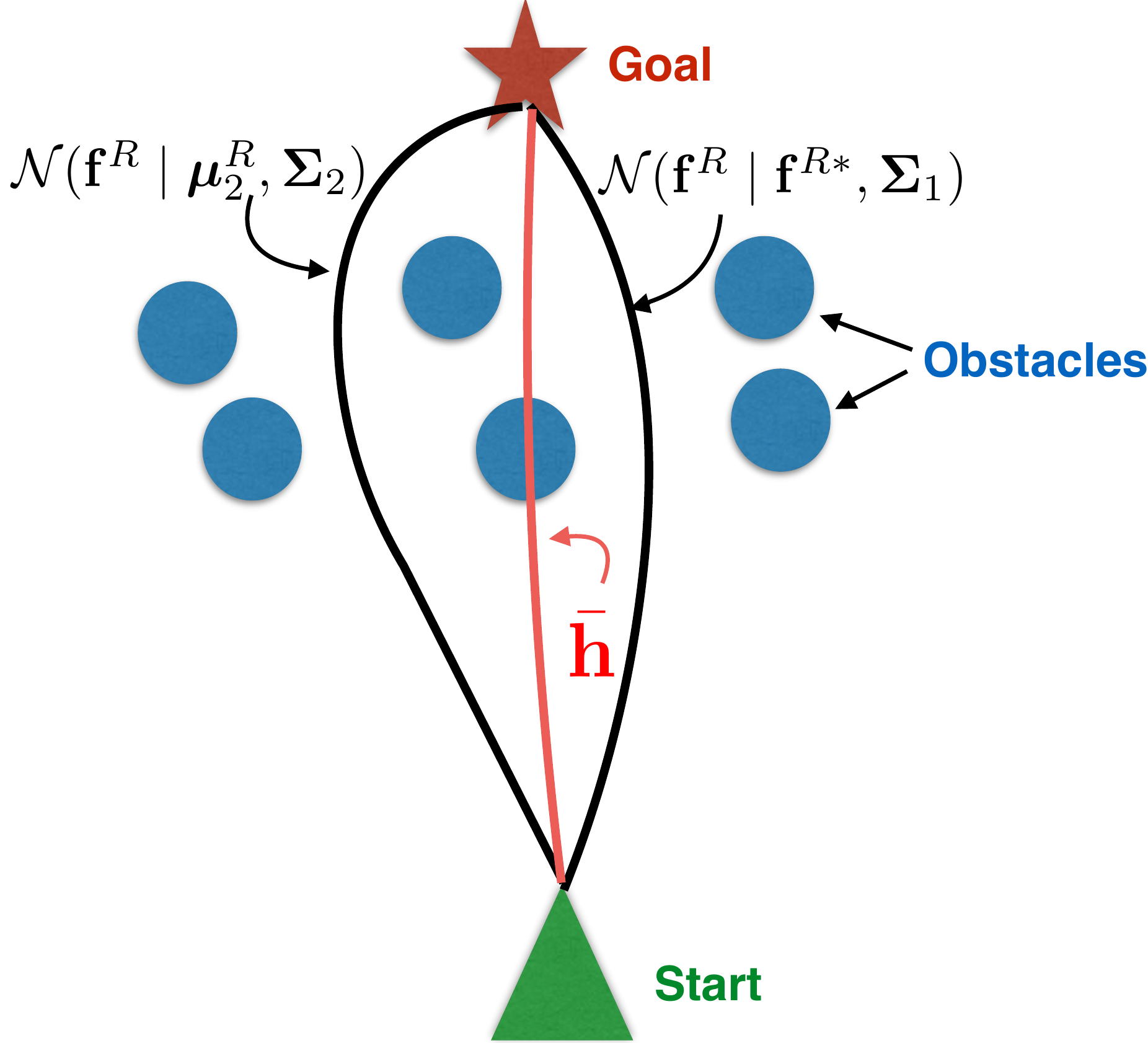}
  \caption{One global autonomy optima at $\bff^{R*}$ and a safe but suboptimal autonomy mode at $\bmu^R_2$ through some obstacle field (additional autonomous modes exist but we leave them off for clarity).  The operator's unimodal predicted trajectory at $\bar\bfh$ is \emph{unsafe}.  Covariance functions removed for clarity.  }
  \label{fig:unsafe-operator}
  \vspace{-0mm}
\end{figure}
In Figure~\ref{fig:unsafe-operator} the operator chooses an \emph{unsafe} trajectory.  For linear trajectory blending, $u^s_{LTB} = K_h\bar\bfh + K_R\bff^{R*}$, and so we must choose our heuristics such that $K_R \gg K_h$ in order to avoid collision---that is, we must insert logic that overrides the operator when the operator makes unsafe decisions.

In contrast, consider
\begin{align*}
&\psi_h(\bfh,\bff^{R})\phuman \pigpshort 
=\psi_h(\bfh,\bff^{R}) \phuman \times\\
&[ \beta_1\mathcal N(\bfr \mid \bff^{R*}, \bSigma_1) +  \beta_2\mathcal N(\bfr \mid \bmu_2^R, \bSigma_2)] \\
&=\psi_h(\bfh,\bff^{R}) \mathcal N(\bfh \mid \bmu_h, \bSigma_h) \times\\
&[ \beta_1\mathcal N(\bfr \mid \bff^{R*}, \bSigma_1) +  \beta_2\mathcal N(\bfr \mid \bmu_2^R, \bSigma_2)]
\end{align*} 
where we have maintained both autonomous modes and a unimodal distribution over $\bfh$.  In this situation $\beta_1$ is close to $\beta_2$, and the difference between the operator mean and either of the autonomy means are nearly the same, so $Z_1$ is close to $Z_2$.  However, both $\bSigma_1$ and $\bSigma_2$ are both fairly narrow (otherwise, they are not safe modes), and because there is flexibility in $\phuman$, the MAP value is close to either $\bff^{R*}$ or $\bmu^R_2$.  Because the operator is treated probabilistically, heuristics are not used to detect poor operator choices: the autonomy assists the operator by blending near the tails of $\phuman$.  

\begin{figure}[h]
  \centering
\includegraphics[scale=0.36]{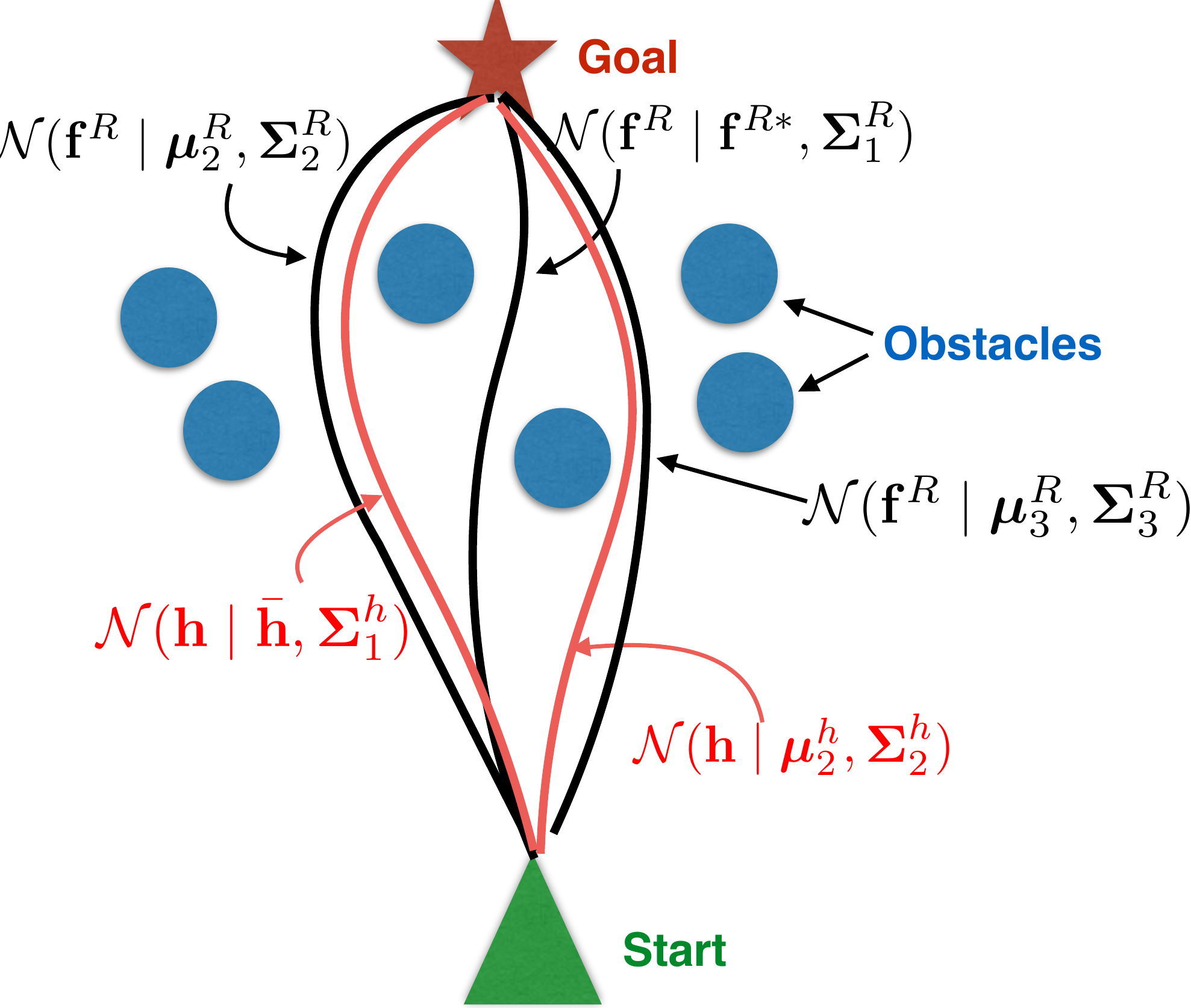}
  \caption{One global autonomy optima at $\bff^{R*}$ and two safe but suboptimal autonomy modes at $\bmu^R_2$ and $\bmu^R_3$ through some obstacle field.  The operator's bimodal predicted trajectory centered at $\bar\bfh$ and $\bmu_2^h$.  Covariance functions removed for clarity.}
  \label{fig:multimodal-operator}
  \vspace{-0mm}
\end{figure}

In Figure~\ref{fig:multimodal-operator}, the operator generates ambiguous data about how he wishes to move between the start and the goal, and so $\phuman$ is bimodal.  Linear blending produces $u^s_{LTB} = K_h\bar\bfh + K_R\bff^{R*}$ and so we end up with a situation very similar to that discussed in Figure~\ref{fig:two-mode-autonomy}---an autonomy and an operator that are needlessly in disagreement, and thus difficult to disambiguate with $K_h$ and $K_R$.  To be fair, if $\bar\bfh$ and $\bff^{R*}$ happen to lie close to one another, then linear blending provides the optimal solution.  But for multimodal autonomous and operator distributions, such a situation is the exception rather than the rule (this exception requiring that the operator make globally optimal decisions).

For probabilistic shared control, the shared control is likely a trajectory near $\bar\bfh$ and $\bmu^R_2$---a solution that is both safe and respects the operator's desires (although, depending on the weights $\alpha_m$ and $\beta_n$ the solution may be near $\bmu^h_2$ and $\bmu_3^R$, a solution that reflects the operator's desires and is still safe).

These figures illustrate Theorem~\ref{thrm:ltb-suboptimal}: that linear blending is suboptimal when either the operator or the autonomy model is multimodal; anecdotally, this is a result of restricting the blend to a single autonomous decision and a single operator trajectory, which can be in conflict when nontrivial modes are not reasoned over.  
\vspace{-0mm}

\section{Conclusions}
\vspace{-0mm}
\noindent We presented a probabilistic formalism for shared control, and showed how the state of the art in shared control, linear blending, and a trajectory based generalization, are special cases of our probabilistic approach.  Further, we showed that linear blending is prone to statistical inconsistencies, but that the probabilistic approach is not.  Finally, we proved that linear blending is suboptimal with respect to agreeability, safety and efficiency.  Importantly, no experiments were performed, and so questions remain as to the viability of this approach.

\bibliographystyle{abbrv}
{\footnotesize
\bibliography{../standard_bibliography}
}

\end{document}